\definecolor{darkblue}{rgb}{0,0,0.75}
\newcommand{\round}{t}
\newcommand{\learner}{i}
\newcommand{\totalLearners}{m}
\newcommand{\uprule}{\varphi}
\newcommand{\model}{f}
\newcommand{\sample}{x}
\newcommand{\truelabel}{y}
\newcommand{\initnoiselevel}{\epsilon_1}
\newcommand{\noiselevel}{\epsilon_\round}
\newcommand{\noise}{\psi}
\newcommand{\repeatthanks}{\textsuperscript{\thefootnote}}
\title{Introducing Noise in Decentralized Training of Neural Networks}
\date{}
\author{
Linara Adilova\inst{1,2}\thanks{These authors contributed equally.} \and
Nathalie Paul\inst{1}\repeatthanks \and
Peter Schlicht\inst{3}
}
\institute{Fraunhofer IAIS \email{<name>.<surname>@iais.fraunhofer.de}
\and
Fraunhofer Center for Machine Learning
\and
Volkswagen Group Research \email{<name>.<surname>@volkswagen.de}
}
\begin{document}

\maketitle

\begin{abstract}
It has been shown that injecting noise into the neural network weights during the training process leads to a better generalization of the resulting model. Noise injection in the distributed setup is a straightforward technique and it represents a promising approach to improve the locally trained models. We investigate the effects of noise injection into the neural networks during a decentralized training process. We show both theoretically and empirically that noise injection has no positive effect in expectation on linear models, though. However for non-linear neural networks we empirically show that noise injection substantially improves model quality helping to reach a generalization ability of a local model close to the serial baseline.

%The concept of noise injection during the training of neural networks has attracted attention since long time and research shows that it has a positive effect on training itself as well as on the quality of the trained models. The decentralized training setup allows to explore this effect in the natural environment of distributed local models searching for a good solution in randomly disturbed directions. We prove theoretically that for linear models zero-mean noise injection does not change the process in expectation and perform experiments with different types of injected noise for commonly used non-linear neural networks.
\end{abstract}

\section{Introduction}
\label{sec:intro}

The idea of noisy optimization originates from the physical process of annealing \citep{kirkpatrick1983optimization}, where noise helps to stabilize the state of the system through changing it by small random perturbations. Injecting noise in various ways is exploited both in convex problems \citep{ben1998robust} and non-convex ones, specifically, in the training of neural networks \citep{an1996effects}. Research empirically shows that noise injection into the neural network objective optimization benefits the training process and improves generalization \citep{edwards1998fault, neelakantan2015adding}. In our work we consider the particular case of noise injection into the model parameters, i.e., the neural network weights.

Optimization of the objective function for various tasks can also be performed in a decentralized manner, i.e., multiple learners train on distributed data sources and synchronize according to a chosen schedule \citep{kamp2014communication, mcmahan2016communication, kamp2016communication}. Decentralized training has a strong motivation coming from a growing amount of devices, e.g., mobile phones, with possibly privacy sensitive data sources and the ability to perform local computations. However, models obtained via synchronizing results of local training often do not reach the performance of serial baseline trained on a centralized dataset. 
%Due to the distinct data sources local models follow different trajectories in the solution space while optimizing. It suggests that overall more possible configurations are considered during training.

%Noise injection to the parameters of multiple synchronizing models could enable an even more intense exploration of the solution space compared to both the centralized case of noisy training and the case of decentralized training without noise. 
In order to tune the quality of the distributed training setup various synchronizing protocols were investigated, which address communication timing and aggregating operators \citep{kamp2017effective, kamp2014communication, kamp2016communication}. We consider using an existing approach for improving serial training, namely noise injection, for the case of decentralized setup in order to reach higher accuracies of the local models.
%To the best of our knowledge there is no existing research concerned with combining these two concepts.

%This paper investigates the effect of noise injection into the decentralized training process of neural networks. 
We give an overview of the related research in Section~\ref{sec:background}. For prior theoretical investigation we prove in Section~\ref{sec:theory} that in the special case of linear neural networks zero-mean noise does not have an adversarial effect on the results of training, because it cancels out in expectation. This is supported by experiments in Section~\ref{ssec:lin_exps}. Since non-linear neural networks have more practical usage, in Section~\ref{ssec:nlin_exps} we empirically study noise injection to non-linear neural networks trained in the decentralized manner and show that it improves performance. The simplest case of noise injection to the neural network weights is initialization noise. Experiments of \citet{mcmahan2016communication} suggest that averaging of independently trained neural networks on different local datasets results in a model that performs worse than any model of the local learners. However, when averaging periodically, our experiments show that noisy initialization is indeed beneficial. Further experiments in Section~\ref{ssec:nlin_exps} are concerned with noise injection during each synchronization step and show an improved quality of the trained models compared to the non-noisy setup for two considered classification tasks. Section~\ref{sec:conclusion} summarizes the research results and suggests possible future work.

\section{Related Work}
\label{sec:background}

%Regularization for the improvement of optimization processes comes from Tikhonov regularization theory for linear models. It was applied to neural networks training by directly adding regularization term to the minimized loss function. The challenge here is to choose a suitable term that is usually problem dependent. Nevertheless, according to multiple research works noise injection during training also has regularization effect. 

Various ways of noise injection into the process of neural networks training were thoroughly investigated in multiple research works. For example, \citet{bishop1995training} has shown that additive noise on the inputs is equivalent to a regularization term in a loss function if the noise amplitude is sufficiently small. \citet{an1996effects} explored noise injection to the inputs, outputs, and weights of neural networks. He observed that additive noise injection to the weights on each update step, either in on-line or in batch gradient descent optimization, leads to higher generalization of the learned solution. 
%It was also shown that even small noise amplitude results in additional regularization when compared just to added term in the loss function in contradiction to \citet{bishop1995training}. 
It was shown that noise added to the inputs affects only the smoothness of the resulting function, while noise injected to the weights also punishes large values and activations. Injected output noise changes the loss function only by a constant value and thus does not affect the quality of the learned function. Later \citet{wang1999training} demonstrated that noise added to the desired signal affects the variance of weights and leads to faster convergence. A different approach to noise injection to the training process is disturbing the gradient updates. It is investigated, for example, in works of \citet{an1996effects} or \citet{neelakantan2015adding}. Empirical results show that adding decaying noise to gradients helps achieving a global minimum, but does not affect the generalization quality of the resulting network. Yet reaching a global minimum might be even harmful for generalization, since such solutions tend to overfit on the training data.

Decentralized training of neural networks on distributed data sources with different ways of synchronization is investigated in works of \citet{mcmahan2016communication}, \citet{jiang2017collaborative}, \citet{smith2017federated} and some others. The issue closest to our research is the initialization aspect considered by \citet{mcmahan2016communication}. The process of the neural network objective optimization is sensitive to the initial state of the model as it was presented by \citet{kolen1991back}. According to \citet{mcmahan2016communication} the averaging is harmful if initialization of the locally trained models is different, which corresponds to injection of random noise to initially equal states. Nevertheless the effect of initialization noise together with periodic averaging was not investigated. 

%Popular technique of fighting with overfitting is also dropout that was introduced by \citet{hinton2012improving}. In the case of linear models \citet{wager2013dropout} have shown that dropout is equivalent to an adaptive $L_2$-regularization that favors some particular features. Analogously, for neural networks dropout also has adaptive regularization effect \citep{baldi2014dropout}.

\section{Noisy Averaging}
\label{sec:theory}
To start our theoretical investigation we consider the simplest case of models, that is linear models, and analyze the effect of noise injection into their parameters for a common training setup. Here we present the summarization of used basic concepts and a proof that noise injection in the considered special case for linear models has no detrimental effect on the final model.

\subsection{Basic Concepts}\label{ssec:basic_concepts}

The task of training a neural network is an optimization problem which is commonly solved by using \textit{stochastic gradient descent} (SGD) or SGD-based algorithms.
SGD is an on-line optimization algorithm in which a local minimum of the objective function is determined by moving in the negative direction of its gradient \citep{chung1954stochastic}. SGD is derived from the gradient descent algorithm and estimates the gradient by considering only one training example.
Let $X$ denote the \textit{input space}, $Y$ the \textit{output space}, $\mathcal{F}$ the \textit{model parameters space} and $\model \in \mathcal{F}$ the model parameters. The SGD update rule for minimizing an \textit{objective function} $\ell: \mathcal{F} \times X \times Y \rightarrow \mathbb{R}_+$ reads

\begin{equation*}
	\uprule_{\eta}^{SGD}(\model) = \model - \eta \nabla \ell(\model, \sample, \truelabel),
\end{equation*}
where the learning rate $\eta$ controls the step size.

In our work we consider \textit{mini-batch SGD} which approximates the gradient by taking batch size $B$ many training examples into account:

\begin{equation}\label{batchSGD}
	\uprule_{B,\eta}^{mSGD}(\model) = \model - \eta \sum_{j=1}^{B}\nabla \ell^j(\model),
\end{equation}

where $\ell^j(\model) = \ell(\model, \sample_j, \truelabel_j).$

An example of a loss function is \textit{squared loss} 
$
	\ell^j(\model) = \nicefrac{1}{2}\Big(\langle \model, x_j \rangle -y_j\Big)^2
$.

Choosing this loss function and computing its gradient, we can write the learning algorithm update rule in the following form:

\begin{equation*}
	\uprule_{B,\eta}^{mSGD}(\model) = \model - \eta \sum_{j=1}^{B}\bigg(\Big(\langle \model, x_j \rangle-y_j \Big)x_j \bigg).
\end{equation*}

\subsection{Periodic Averaging Protocol with Noise Injection}\label{ssec:theory_alg}

As a protocol for the theoretical investigation of the effect of noise injection in the decentralized setup we consider the periodic averaging protocol with zero-mean noise injection (Algorithm~\ref{alg:protocol}). Let $\Psi$ denote a probability distribution with mean zero and $\noiselevel \geq 0$ the time-dependent noise level factor.

\begin{algorithm2e}[ht]
	\caption{Periodic averaging protocol with zero-mean noise injection}
	\label{alg:protocol}
	%\bigskip
    \smallskip
    \textbf{Input:}    batch size $b$\\    
	\smallskip
	\textbf{Initialization:}\\
	%\vspace{-0.4cm}
	\begin{algorithmic}[0]
		\STATE local models $\model^1_1,\dots,\model^\totalLearners_1 \leftarrow$ one random $\model$
	\end{algorithmic}
	%\bigskip
	\smallskip
    
	\textbf{At round }$\round$\textbf{:}\\
	%\vspace{-0.4cm}
    
	\begin{algorithmic}[0]
    	\STATE \textbf{At learner }$\learner$\textbf{:}\\
        \Indp
        \vspace{0.05cm}
        \STATE \textbf{observe} $\left(\sample^\learner_{\round,1},y^\learner_{\round,1}\right),\dots,\left(\sample^\learner_{\round,B},\truelabel^\learner_{\round,B}\right)$\\
        \vspace{0.02cm}
		\STATE $\noise_{\learner} \leftarrow \Psi$
        \STATE \textbf{update} $\model^\learner_{\round}$ using the learning algorithm $\uprule: \model^{\learner}_{\round +1} \leftarrow \uprule\left(\model^{\learner}_{\round} + \noiselevel\noise_{\learner}\right)$\\
        \Indm
        \vspace{0.05cm}
        \STATE \textbf{If} $\round\mod b=0$\textbf{:}\\
        \Indp
        \STATE \textbf{synchronize local models: } $\model^{\learner}_{\round +1} \leftarrow \frac{1}{m}\sum_{i=1}^{m} \model^{\learner}_{\round +1}$
	\end{algorithmic}
\end{algorithm2e}

In the following we analyze the influence of noise injection on the behavior of the learning algorithm for the special case of linear models. For this case we can prove that adding zero-mean noise to the parameters optimized by mini-batch SGD with squared loss does not change the model parameters in expectation.

\noindent
%Under these assumptions we show that, given the same initialization, the expected model trained with mini-batch SGD and additive zero-mean weight noise injection is equivalent to the model trained with common mini-batch SGD (without noise).

\begin{lemma}
	For a linear model let $\model_\round$ denote the model parameters attained by using mini-batch SGD with squared loss and scaled additive zero-mean weight noise injection, i.e.
	\begin{align*}
    \model_{\round +1} = \uprule_{B,\eta}^{mSGD}(\model_{\round} + \noiselevel\noise).
	\end{align*}
    Let $g_\round$ denote the model parameters attained by using common mini-batch SGD, i.e.
    \begin{align*}
    	g_{\round +1} = \uprule_{B,\eta}^{mSGD}(g_{\round}).
	\end{align*}
    If the learning algorithms are identically initialized, it holds
	\begin{equation}
		\mathbb{E}\big[\model_\round \big] = g_\round \quad\quad \text{for all } t=1,...,T.
	\end{equation}
\end{lemma}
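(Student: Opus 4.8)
The plan is to argue by induction on $\round$. The base case $\round=1$ is immediate: both procedures are initialized with the same deterministic model, so $\mathbb{E}[\model_1]=\model_1=g_1$. For the inductive step the crucial observation is that for a linear model the gradient of the squared loss is \emph{affine} in the parameters, so that the update map $\uprule_{B,\eta}^{mSGD}$ commutes with taking expectations; the injected noise then appears only through terms that are linear in $\noise$, and these vanish because $\noise$ has mean zero.

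Concretely, I would first expand the noisy update. Substituting $\model_\round+\noiselevel\noise$ into the mini-batch SGD rule for squared loss and using bilinearity of $\langle\cdot,\cdot\rangle$ gives
\begin{align*}
\model_{\round+1} = \model_\round - \eta\sum_{j=1}^{B}\Big(\langle \model_\round, x_j\rangle - y_j\Big)x_j \;+\; \noiselevel\noise \;-\; \eta\,\noiselevel\sum_{j=1}^{B}\langle \noise, x_j\rangle\, x_j ,
\end{align*}
where the first two terms are exactly $\uprule_{B,\eta}^{mSGD}(\model_\round)$ and the last two collect the contribution of the injected noise, each linear in $\noise$. The fresh draw $\noise\sim\Psi$ used at round $\round$ is independent of $\model_\round$, since $\model_\round$ is a function only of the initialization, the data, and the noise drawn at rounds $1,\dots,\round-1$. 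Taking expectations (over the noise; if the data is itself random one first conditions on it) and using $\mathbb{E}[\noise]=0$, the two noise terms drop out because $\noiselevel\,\mathbb{E}[\noise]=0$ and $\eta\,\noiselevel\sum_j\langle\mathbb{E}[\noise],x_j\rangle x_j=0$, while linearity of $\langle\cdot,\cdot\rangle$ in its first argument gives $\mathbb{E}\big[(\langle\model_\round,x_j\rangle-y_j)x_j\big]=(\langle\mathbb{E}[\model_\round],x_j\rangle-y_j)x_j$. Hence $\mathbb{E}[\model_{\round+1}]=\uprule_{B,\eta}^{mSGD}(\mathbb{E}[\model_\round])$, and combining this with the induction hypothesis $\mathbb{E}[\model_\round]=g_\round$ and $g_{\round+1}=\uprule_{B,\eta}^{mSGD}(g_\round)$ yields $\mathbb{E}[\model_{\round+1}]=g_{\round+1}$.

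The only genuine subtlety — and the step I expect to be the crux — is the commutation $\mathbb{E}\big[\uprule_{B,\eta}^{mSGD}(\model_\round)\big]=\uprule_{B,\eta}^{mSGD}(\mathbb{E}[\model_\round])$: this holds precisely because, for squared loss on a linear model, the update is affine in $\model$, so no second or higher moments of $\model_\round$ (nor of $\noise$) ever enter the computation. This is exactly the property that breaks for non-linear networks, whose gradients are non-linear in the weights, so there the noise does not average out. A minor bookkeeping point along the way is to state explicitly that each round uses an independent draw from $\Psi$ (so that $\noise$ is independent of $\model_\round$) and to fix the convention for the data, but neither requires real work.
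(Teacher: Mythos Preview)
Your proposal is correct and follows essentially the same route as the paper's proof: induction on $t$, expansion of the mini-batch SGD update for squared loss, cancellation of the terms linear in $\noise$ via $\mathbb{E}[\noise]=0$, and application of the induction hypothesis. If anything, you are slightly more explicit than the paper in naming the two ingredients that make the argument work (independence of the fresh noise from $\model_\round$, and affinity of the update in the parameters so that $\mathbb{E}$ commutes with $\uprule_{B,\eta}^{mSGD}$).
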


\begin{proof}
	We use induction over $t$.
	The case $t=1$ follows immediately since by initialization the model parameters trained with and without noise are the same.
	Applying the definitions above, the expected model parameters for mini-batch SGD with noise injection read
        
        \begin{align*}
        	\mathbb{E}\Big[\model_\round \Big] & = \mathbb{E}\Big[\uprule_{B,\eta}^{mSGD}\left(\model_{\round-1} + \noiselevel\noise\right) \Big] \\
            &= \mathbb{E}\bigg[\model_{\round-1} + \noiselevel\noise - \eta \sum_{j=1}^{B}\bigg(\Big(\langle \model_{\round-1}+ \noiselevel\noise, \sample_{(\round-1,j)} \rangle-\truelabel_{(\round-1,j)} \Big)\sample_{(\round-1,j)} \bigg) \bigg] \\
            &= \noiselevel\underbrace{\mathbb{E}\big[\noise \big]}_{=0} + \mathbb{E}\bigg[\model_{\round-1} - \eta \sum_{j=1}^{B}\bigg(\Big(\langle \model_{\round-1}, \sample_{(\round-1,j)} \rangle-\truelabel_{(\round-1,j)} \Big)\sample_{(\round-1,j)} \bigg) \bigg] \\ 
            &\qquad - \underbrace{\mathbb{E}\bigg[ \eta \sum_{j=1}^{B}\Big(\noiselevel \langle\noise, \sample_{(\round-1,j)} \rangle \sample_{(\round-1,j)} \Big) \bigg]}_{=0},
        \end{align*}
        
\noindent
where we used that $\noise$ is centered.
By employing the induction assumption we get $\mathbb{E}\big[\model_{\round-1}\big] = g_{\round -1}$ and conclude
        
    \begin{align*}
    	\mathbb{E}\Big[\model_\round \Big] & = g_{\round-1} - \eta \sum_{j=1}^{B}\bigg(\Big(\langle g_{\round-1}, \sample_{(\round-1,j)} \rangle-\truelabel_{(\round-1,j)} \Big)\sample_{(\round-1,j)} \bigg) \\
        &= \uprule_{B,\eta}^{mSGD}(g_{\round-1}) \\
        &= g_{\round}.
	\end{align*}    
	\qed 
\end{proof}      

The equivalence of Algorithm~\ref{alg:protocol} to the non-noisy periodic averaging protocol in expectation directly follows from the linearity of the expected value.
                                                                                   
\begin{corollary}
	For linear models trained with mini-batch SGD and squared loss given identical initialization, the expected model obtained by the periodic averaging protocol with zero-mean noise injection is equivalent to the model obtained by the non-noisy periodic averaging protocol.
\end{corollary}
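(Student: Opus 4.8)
The plan is to lift the single-learner Lemma to the full $m$-learner periodic averaging protocol by induction on the round $t$, treating the synchronization step as the only new ingredient. First I would fix notation: let $\model^\learner_\round$ denote the parameters of learner $\learner$ under Algorithm~\ref{alg:protocol} and let $h^\learner_\round$ denote the parameters of learner $\learner$ under the non-noisy periodic averaging protocol (same batches, same schedule, same initialization). The claim to prove by induction is $\Exp[\model^\learner_\round] = h^\learner_\round$ for every $\learner$ and every $\round$. The base case $\round = 1$ is immediate because all learners start from one common random $\model$ in both protocols.

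For the inductive step I would distinguish the two things that can happen at round $\round$. If $\round \bmod b \neq 0$, then each learner independently applies $\uprule^{mSGD}_{B,\eta}$ to $\model^\learner_\round + \noiselevel\noise_\learner$; this is exactly the situation of the Lemma applied per learner, so taking expectations and using that $\noise_\learner$ is centered and independent of $\model^\learner_\round$ (which in turn depends only on earlier noises and data) gives $\Exp[\model^\learner_{\round+1}] = \uprule^{mSGD}_{B,\eta}(\Exp[\model^\learner_\round]) = \uprule^{mSGD}_{B,\eta}(h^\learner_\round) = h^\learner_{\round+1}$, where the middle equality is linearity of $\uprule^{mSGD}_{B,\eta}$ in its argument for squared loss (the same computation as in the Lemma's proof). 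If instead $\round \bmod b = 0$, each learner first performs the noisy update as above and then sets $\model^\learner_{\round+1} \leftarrow \frac{1}{m}\sum_{\learner'=1}^{m}\model^{\learner'}_{\round+1}$; by linearity of expectation $\Exp\big[\frac{1}{m}\sum_{\learner'}\model^{\learner'}_{\round+1}\big] = \frac{1}{m}\sum_{\learner'}\Exp[\model^{\learner'}_{\round+1}] = \frac{1}{m}\sum_{\learner'} h^{\learner'}_{\round+1}$, which is precisely the synchronized value in the non-noisy protocol. Either way the induction closes.

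The main obstacle is conceptual rather than computational: one must be careful that the Lemma's argument really does apply verbatim at each learner even though the noise vectors at different learners and rounds are all different, and that the cross-term $\Exp[\eta\sum_j \noiselevel\langle\noise_\learner, \sample_{(\round,j)}\rangle \sample_{(\round,j)}]$ still vanishes — this needs the noise $\noise_\learner$ drawn at round $\round$ to be independent of (or at least uncorrelated with, and zero-mean conditionally on) the data batch and the current iterate, which holds since $\noise_\learner \leftarrow \Psi$ is a fresh draw. Once this independence/centering bookkeeping is granted, the proof is just ``apply the Lemma coordinate-wise over learners, then push the expectation through the averaging operator by linearity,'' exactly as the sentence preceding the Corollary asserts. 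I would therefore keep the write-up short, citing the Lemma for the local-update case and invoking linearity of $\Exp$ for the averaging case, and conclude that $\Exp[\model^\learner_\totalRounds] = h^\learner_\totalRounds$ for all learners, which is the stated equivalence in expectation.
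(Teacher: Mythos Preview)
Your proposal is correct and is essentially the same approach as the paper's: the paper dispatches the Corollary in a single sentence by remarking that it ``directly follows from the linearity of the expected value,'' and your induction over rounds---invoking the Lemma for local updates and linearity of $\Exp$ for the averaging step---is exactly a careful unpacking of that remark. The independence/centering bookkeeping you flag is a reasonable point of care, but there is no substantive difference in strategy.
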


%Albeit that in expectation zero-mean noise injection does not change the learned linear model for distributed and serial training, noisy linear models are straightforwardly less prone to overfitting on the training data.
For the specific considered case of linear models we observe that noise injection has no influence on the model parameters in expectation both for serial and distributed learning.
This theoretical result is supported by empirical evidence in the following Section.
In contrast, for non-linear models the research of \citet{an1996effects,edwards1998fault} has shown that noise injection in serial training helps improving generalization. We conjecture that in the distributed training setup injecting noise into non-linear models might also improve generalization properties of the obtained solution compared to the non-noisy training.
We leave a theoretical investigation of the effect of noise injection into non-linear models in distributed training setup for future work.
To substantiate our conjecture, in the following Section we perform an empirical analysis of zero-mean noise injection in a distributed setup for non-linear neural networks.

%Knowing that on expectation learned linear models are not affected by zero-mean noise injection we can conjecture that it might hold for the non-linear case as well. We leave the theoretical research of noise injection into non-linear models for future work. Nevertheless, non-linear models, in particular non-linear neural networks, are the subject of larger interest when considering real-world problems. 

\section{Empirical Evaluation}
\label{sec:exps}
To investigate the effect of noise injection into the neural networks trained in a decentralized manner we performed a set of experiments described further.

The decentralized setup of these experiments is periodic synchronization via averaging (cf. Algorithm~\ref{alg:protocol}). Apart from the distributed synchronized models two baselines are trained: a local model without any synchronization with other local learners (no-sync) and a model with full data centralization (serial). These baselines are necessary to assess the performance of the synchronizing local learners, since synchronization aims to reach the performance of the serial model and outperform the no-sync baseline. Noise injection into the serial baseline is also a subject of interest, allowing to compare the possible gains in generalization ability in centralized and distributed case. For evaluation of the synchronizing learners the last averaged model obtained during the training process is used. For evaluation of the no-sync baseline we pick one random model among locally trained ones. 

\begin{wrapfigure}{r}{0.4\textwidth}
	\includegraphics[width=0.4\textwidth]{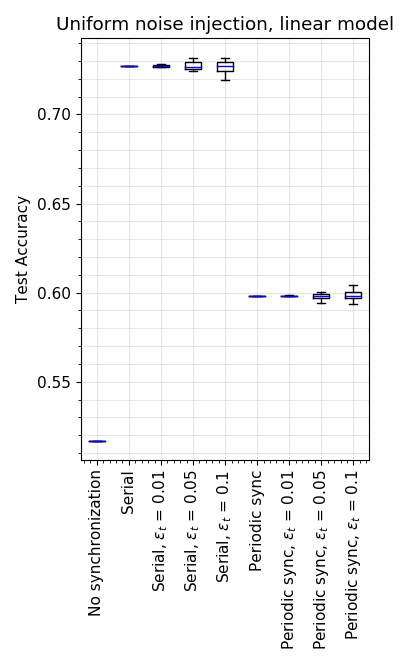}
	\captionof{figure}{Effect on test accuracy of noise injection to local learners and to the serial baseline.}
	\label{fig:linear-noise}
\vspace{-1cm}
\end{wrapfigure}

Since we explore the effect of noise injection, we are interested in the behavior of models trained throughout several experiments that differ only by the used noise. Thus all the setups were run $10$ times without using fixed random seed. This produces an indication of how distribution of possible outcomes of the training process looks like. The results are presented in the form of box plots. Here the box shows the observed values from the first to the third quartile, with a line at the median. The whiskers show the range of the results and points are representing outliers.

\subsection{Linear Neural Networks}
\label{ssec:lin_exps}

First set of experiments is performed to empirically evaluate the theoretical result obtained in Section~\ref{sec:theory}. We employed a linear neural network with three layers having $32$, $64$ and $1$ neuron correspondingly for approximating the target column of SUSY dataset \citep{baldi2014searching}. For linear model experiments the dataset was normalized thus having $-1$ and $1$ as targets and accuracy was calculated with $0$ threshold. The optimal training parameters determined were mini-batch of $B = 10$ examples, learning rate of $\eta=1\mathrm{e}{-5}$ for the serial and no-sync baselines and $\eta=2.5\mathrm{e}{-5}$ for the local learners. We employed squared loss as explained in Section~\ref{sec:theory}. During the training each local learner was presented $20000$ examples, while serial baseline had $m \cdot 20000$ examples with $m = 10$. The noise used for this experiment is additive uniform noise in the range $[-0.5, 0.5]$ with decay factor equal to the synchronization round number. Absence of decay factor was leading to fast overflow thus making experiments not runnable.

Figure~\ref{fig:linear-noise} shows the test accuracy evaluated for $10$ runs for the baselines and synchronizing learners on the independent test set of $1000000$ examples. We can observe that larger noise injected into the weights of the models leads to larger variance of obtained accuracy at the same moment leaving the median value throughout different setups the same. This empirically supports the effect of noise cancellation in expectation for linear models in this training setup.

\subsection{Non-linear Neural Networks}
\label{ssec:nlin_exps}

In the following we evaluate noise injection to the decentralized training of non-linear neural networks on the basis of two classification tasks. For our experiments we choose to add uniform noise in range $[-0.5, 0.5]$ and Gaussian noise in the same range. 

\subsubsection{Binary Classification}
For preliminary evaluation of the approach for non-linear case we have considered the classification task on the SUSY dataset. In contrast to the linear case, the employed model architecture is a three-layered dense network with sigmoid activations. The first layer has $32$ neurons, the second $64$ and the output layer has $2$ neurons with softmax activation. We have determined the optimal parameters of the non-noisy learning algorithm on a small fraction of the dataset. That is training mini-batch of $B = 10$ examples, learning rate of $\eta=0.1$ for the serial and no-sync baselines and $\eta=0.25$ for the local learners.

\paragraph{Initialization Noise}
The simplest way of noise injection to the distributed neural networks is one step noise injection right after initialization.
In \citet{mcmahan2016communication} it was shown that such noise together with one-time synchronization after local training results in a worse model than each local one in terms of training loss. We want to investigate whether periodic synchronization is capable of being more robust to initial noise injection.

With regard to Algorithm~\ref{alg:protocol}, initialization noise corresponds to choosing $\initnoiselevel > 0$ and $\noiselevel = 0$ for all $t > 1$. It means that we add randomly sampled noise to each initial weight before starting the training process.

\begin{figure}
	\centering
	\begin{subfigure}{0.48\textwidth}
		\centering
		\includegraphics[width=1\linewidth]
{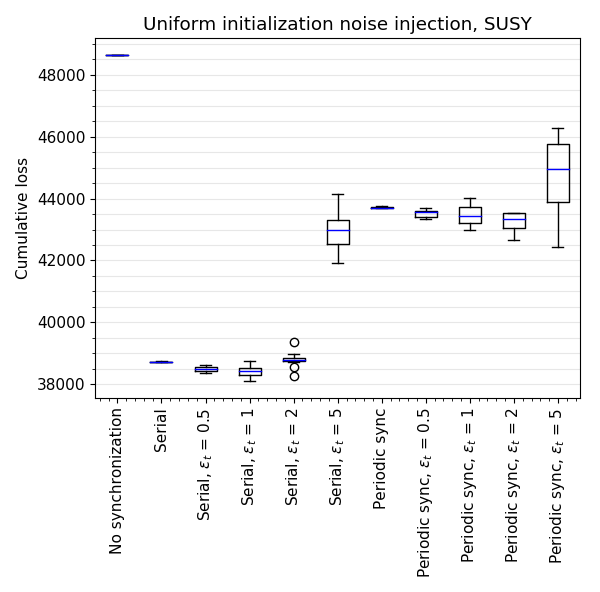}
		\label{fig:init_sub1}
	\end{subfigure}
    \quad
	\begin{subfigure}{0.48\textwidth}
		\centering
		\includegraphics[width=1\linewidth]{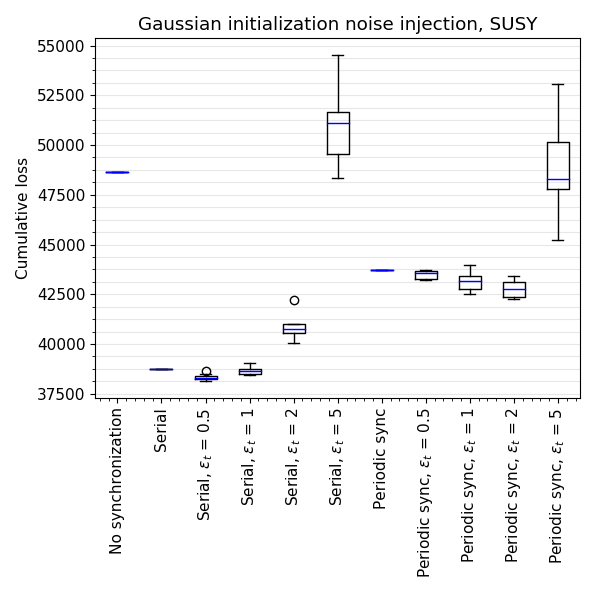}
		\label{fig:init_sub2}
	\end{subfigure}
	\caption{Effect on cumulative training loss of uniformly and normally distributed noise added to the equal initializations of local learners and to the serial baseline.}
	\label{fig:init-noise}
\end{figure}

Figure~\ref{fig:init-noise} shows that both Gaussian and uniform initialization noise improves the serial and periodically synchronizing models in terms of cumulative training loss when using it up to some small extent ($\noiselevel < 1$). On the contrary, higher levels of noise (e.g. $\noiselevel = 5$) make training harder in each setup. Interestingly, a noise level of $\noiselevel$ = 2 for both Gaussian and uniform noise leads to a higher cumulative training loss in the serial model while the distributed setup benefits from it.

Even though the Gaussian distribution is a very popular choice for initializing neural networks (\citet{glorot2010understanding}), adding large levels noise drawn from the normal distribution deteriorates the training process worse than uniformly distributed noise. One possible explanation is that the initial weights are already distributed normally according to the best practices and additional Gaussian noise intervenes with it in a destructive way. 

%Enlarging the amount of synchronizing learners from $10$ to $20$ maintaining the number of training examples per learner ($8000$) has led to a seemingly unexpected result--the effect of diverse initialization was lost and all the noise levels showed nearly equal performance. But the evaluation of the trained models on the test dataset revealed that the validation accuracies of all the distributed setups were equivalent to the validation accuracy of the serial baseline, meaning that the amount of data seen by the learners was enough to converge and make cumulative loss virtually equal. Indeed, reproducing the experiment with a smaller amount of training data resulted in a similar result to the observed with $10$ learners.
%It also indicates that theoretical results in Section~\ref{sec:theory} are valid for the non-linear case--larger amount of training examples cancels out the effect of the noise injection.

Initialization noise experiments reveal that one-time noise injection to the initially equal model weights helps the training process since the cumulative training loss is decreasing. This motivates follow-up experiments with continuous noise injection.

%Uniform or gaussian? What size of the noise is allowable?
%How well trained network should be? 
%How many nodes are working and how amount of nodes affects the experiment result?
%How anchor batch size changes picture?

\paragraph{Continuous Noise}
Extending the initialization noise setup we now perform additional noise injection steps: Zero-mean noise gets injected to the local models' weights after every synchronization step.  
Formally, in Algorithm~\ref{alg:protocol} it corresponds to $\initnoiselevel > 0$, $\noiselevel > 0$ for all $t \mod b = 0$ and $\noiselevel = 0$ otherwise. Following the work of \citet{murray1994enhanced} the noise is decaying and the decaying factor is equal to the index number of the synchronization step, i.e. noise level is given by $\nicefrac{\noiselevel}{t}$. 

We want to explore whether continuous noise injection improves the generalization ability of the resulting models in the distributed setup. Therefore we calculated the evaluation accuracy on an independent test set of $1000000$ examples for each of the trained models. During training, each of the local learners $i$ is presented $1000$ examples from the training dataset, while the serial model sees $m\cdot 1000$ examples. Various setups together with evaluated validation accuracies are depicted as box plots in Figure~\ref{fig:contin-noise-susy}.

\begin{figure}[H]
	\centering
	\begin{subfigure}{0.48\textwidth}
		\centering
		\includegraphics[width=1\linewidth]
{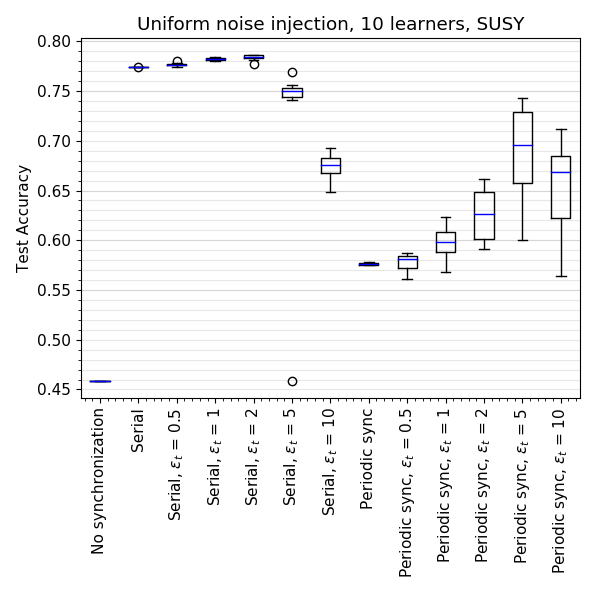}
		\label{fig:contin-noise-susy_sub1}
	\end{subfigure}
    \quad
	\begin{subfigure}{0.48\textwidth}
		\centering
		\includegraphics[width=1\linewidth]{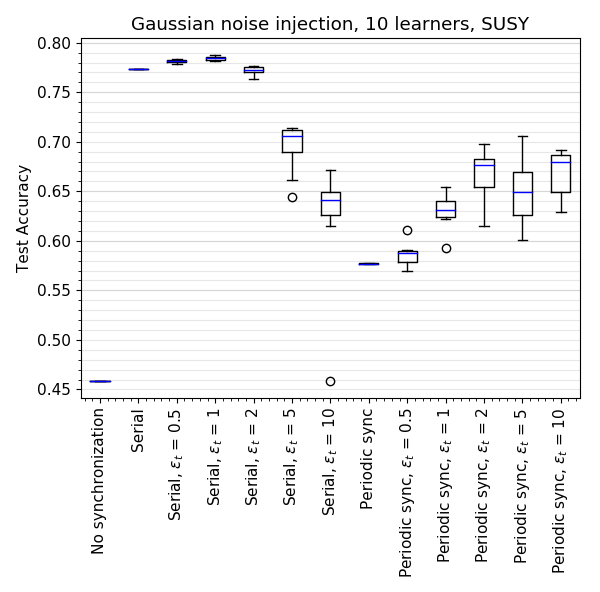}
		\label{fig:contin-noise-susy_sub2}
	\end{subfigure}
	\begin{subfigure}{0.48\textwidth}
		\centering
		\includegraphics[width=1\linewidth]
{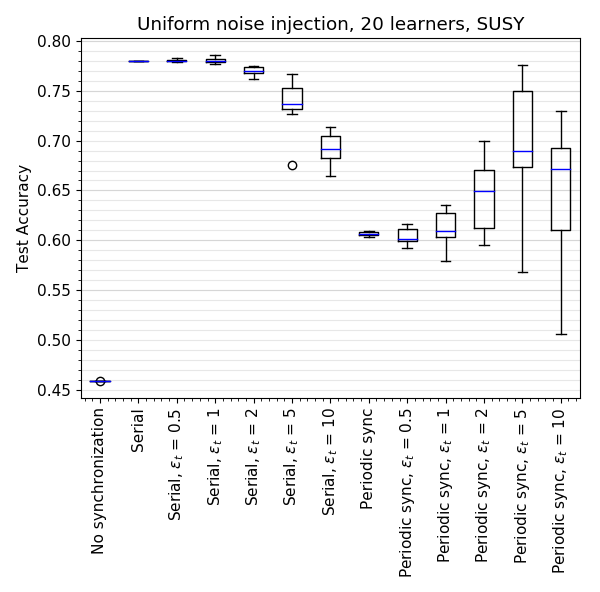}
		\label{fig:contin-noise-susy_sub3}
	\end{subfigure}
    \quad
	\begin{subfigure}{0.48\textwidth}
		\centering
		\includegraphics[width=1\linewidth]{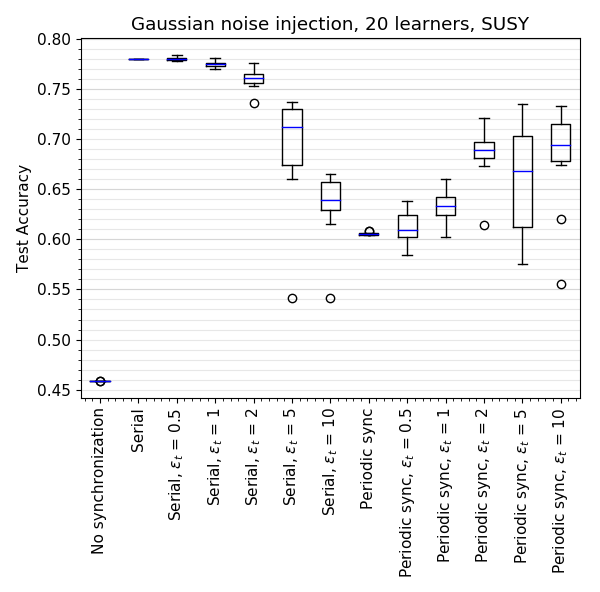}
		\label{fig:contin-noise-susy_sub4}
	\end{subfigure}
	\caption{Effect on test accuracy of injecting uniformly or normally distributed noise every $b$ time steps throughout the training process of local learners and the serial baseline.}
	\label{fig:contin-noise-susy}
\end{figure}

The evaluation shows that noise injection can substantially improve the generalization ability of the models. When comparing results of the setup with $10$ and $20$ learners, we see that a larger amount of learners leads to a larger spread of the results of the training process that can mean either a better generalization or to the contrary convergence to a worse model.

One can also observe that with growth of the uniform noise level the resulting test accuracy becomes more unstable, i.e., for having a possibly higher median we get a larger range of values.
In the experiments, Gaussian noise is more stable than uniform noise while on the other hand it has more outliers below the median. The spread of the serial baseline with noise injection is very small compared to the distributed models. It might be interesting to investigate the reasons why noise has a more pronounced effect in the distributed setup than in the serial one.

%With or without initial noise?
%Correspondence of initial noise to the weights noise?
%Decay
%Uniform or gaussian?
%Additive or multiplicative?
%Varying anchor batch size

\subsubsection{Image Classification}

\begin{figure}[H]
	\centering
	\begin{subfigure}{0.48\textwidth}
		\centering
		\includegraphics[width=1\linewidth]
{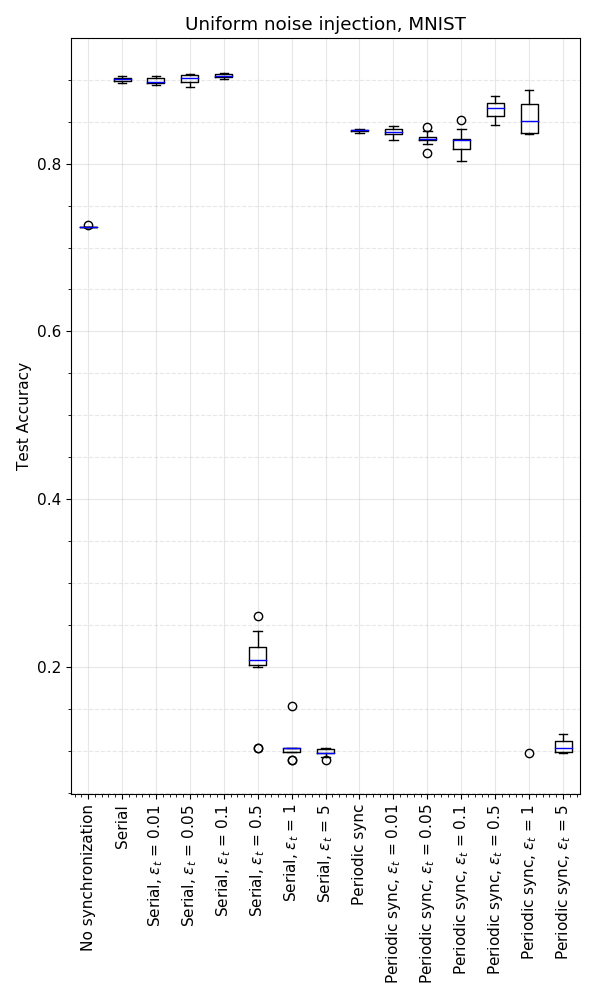}
		\label{fig:contin-noise-mnist_sub1}
	\end{subfigure}
    \quad
	\begin{subfigure}{0.48\textwidth}
		\centering
		\includegraphics[width=1\linewidth]{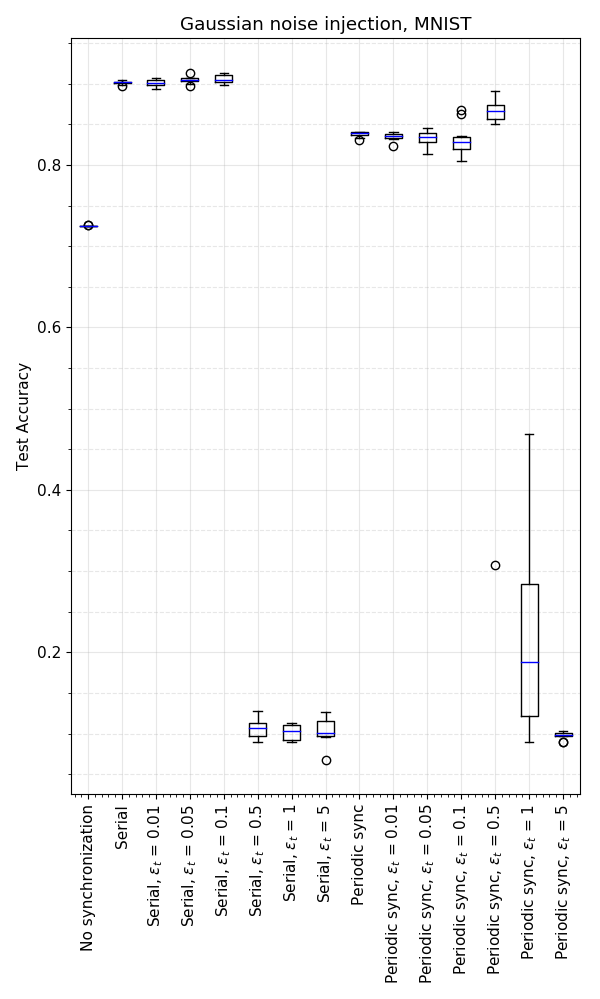}
		\label{fig:contin-noise-mnist_sub2}
	\end{subfigure}
	\caption{Effect on test accuracy of injecting uniformly and normally distributed noise every $b$ time steps throughout the training process of local learners and the serial baseline.}
	\label{fig:contin-noise-mnist}
\end{figure}

To further investigate the effect of noise injection into non-linear models in a distributed setup we have chosen the classification task on the MNIST dataset \citep{lecun1998mnist}. The model architecture is more complicated than in the previous experiment. It has two dense layers with $512$ neurons each and a dropout layer after each of them. The output layer performs a softmax activation to predict one of the ten classes. The activation of the dense layers is ReLU.
We have determined the optimal parameters of the non-noisy learning algorithm on a small fraction of the dataset. That is, equivalently to the first experiment, training mini-batch of $B = 10$ examples, learning rate of $\eta=0.1$ for the serial and no-sync baselines and $\eta=0.25$ for the local learners.
During training each learner is presented $500$ examples from the training set and evaluation is performed on the $10000$ images of the test set.

We observe that in this experiment, compared to the previous, noise injection requires different level values in order to obtain the improved quality of the models. More precisely, for noise levels $\noiselevel \ge 1$ we get prohibitively low test accuracies for both uniform and Gaussian noise. One possible explanation is that the ReLU activation is much more sensitive to noise disturbance of the weights compared to the bounded sigmoid function which we have used in the first experiment. Moreover the dropout layers might be contributing to this effect since dropout is also supposed to prevent a model from overfitting \citep{srivastava2014dropout}. More refined research on this effect is left for future work.
Concentrating on lower levels of noise ($\noiselevel < 0.5$), we see that we can again improve the generalization ability of the trained models. In the distributed setup this effect is more pronounced than in the serial one similarly to the first set of experiments.

\section{Conclusion}
\label{sec:conclusion}

The research presented in this paper investigates noise injection into neural networks, trained in a decentralized manner on distributed data sources. We have proven that for linear models in a common training setup zero-mean noise injection retains the results of the non-noisy setup.
We have performed experiments to empirically underline this theoretical statement.
Further experiments show that with non-linear models in a distributed setup noise injection improves the quality of the models. The evaluation shows that indeed carefully chosen levels of noise have a positive effect on the generalizing abilities of the synchronized models. It might be explained by the fact that noise enforces wider exploration of the space of solutions \citep{gan2017scalable, chen2017noisy} which is an interesting subject for further investigation. 
Also, experiments show that the impact of noise in the distributed training is even greater than in the serial case.

Future research could investigate the theoretical background of noise injection to non-linear models in a decentralized training setup as well as the effect of various network architectures and training parameters. A promising framework for studying noise injection effects is regularization theory, since injected noise can be described as a regularization term that is added to the loss function \citep{an1996effects,bishop1995training}.

%One more interesting direction for future work is the investigation of the effect of noise injection on the convergence rate for both linear and non-linear models in a decentralized training setup.

\newpage

\bibliographystyle{plainnat}
\bibliography{bibliography}

\begin{thebibliography}{22}
\providecommand{\natexlab}[1]{#1}
\providecommand{\url}[1]{\texttt{#1}}
\expandafter\ifx\csname urlstyle\endcsname\relax
  \providecommand{\doi}[1]{doi: #1}\else
  \providecommand{\doi}{doi: \begingroup \urlstyle{rm}\Url}\fi

\bibitem[An(1996)]{an1996effects}
Guozhong An.
\newblock The effects of adding noise during backpropagation training on a
  generalization performance.
\newblock \emph{Neural computation}, 8\penalty0 (3):\penalty0 643--674, 1996.

\bibitem[Baldi et~al.(2014)Baldi, Sadowski, and Whiteson]{baldi2014searching}
Pierre Baldi, Peter Sadowski, and Daniel Whiteson.
\newblock Searching for exotic particles in high-energy physics with deep
  learning.
\newblock \emph{Nature communications}, 5:\penalty0 4308, 2014.

\bibitem[Ben-Tal and Nemirovski(1998)]{ben1998robust}
Aharon Ben-Tal and Arkadi Nemirovski.
\newblock Robust convex optimization.
\newblock \emph{Mathematics of operations research}, 23\penalty0 (4):\penalty0
  769--805, 1998.

\bibitem[Bishop(1995)]{bishop1995training}
Chris~M Bishop.
\newblock Training with noise is equivalent to tikhonov regularization.
\newblock \emph{Neural computation}, 7\penalty0 (1):\penalty0 108--116, 1995.

\bibitem[Chen et~al.(2017)Chen, Deng, and Du]{chen2017noisy}
Binghui Chen, Weihong Deng, and Junping Du.
\newblock Noisy softmax: Improving the generalization ability of dcnn via
  postponing the early softmax saturation.
\newblock In \emph{The IEEE Conference on Computer Vision and Pattern
  Recognition (CVPR)}, 2017.

\bibitem[Chung(1954)]{chung1954stochastic}
Kai~Lai Chung.
\newblock On a stochastic approximation method.
\newblock \emph{The Annals of Mathematical Statistics}, pages 463--483, 1954.

\bibitem[Edwards and Murray(1998)]{edwards1998fault}
Peter~J Edwards and Alan~F Murray.
\newblock Fault-tolerance via weight-noise in analogue vlsi implementations—a
  case study with epsilon.
\newblock \emph{IEEE Proc. on Circuits and Systems II: Analog and Digital
  Signal Processing}, 45\penalty0 (9):\penalty0 1255--1262, 1998.

\bibitem[Gan et~al.(2017)Gan, Li, Chen, Pu, Su, and Carin]{gan2017scalable}
Zhe Gan, Chunyuan Li, Changyou Chen, Yunchen Pu, Qinliang Su, and Lawrence
  Carin.
\newblock Scalable bayesian learning of recurrent neural networks for language
  modeling.
\newblock In \emph{Proceedings of the 55th Annual Meeting of the Association
  for Computational Linguistics (Volume 1: Long Papers)}, volume~1, pages
  321--331, 2017.

\bibitem[Glorot and Bengio(2010)]{glorot2010understanding}
Xavier Glorot and Yoshua Bengio.
\newblock Understanding the difficulty of training deep feedforward neural
  networks.
\newblock In \emph{Proceedings of the thirteenth international conference on
  artificial intelligence and statistics}, pages 249--256, 2010.

\bibitem[Jiang et~al.(2017)Jiang, Balu, Hegde, and
  Sarkar]{jiang2017collaborative}
Zhanhong Jiang, Aditya Balu, Chinmay Hegde, and Soumik Sarkar.
\newblock Collaborative deep learning in fixed topology networks.
\newblock In \emph{Advances in Neural Information Processing Systems}, pages
  5904--5914, 2017.

\bibitem[Kamp et~al.(2014)Kamp, Boley, Keren, Schuster, and
  Sharfman]{kamp2014communication}
Michael Kamp, Mario Boley, Daniel Keren, Assaf Schuster, and Izchak Sharfman.
\newblock Communication-efficient distributed online prediction by dynamic
  model synchronization.
\newblock In \emph{Joint European Conference on Machine Learning and Knowledge
  Discovery in Databases}, pages 623--639. Springer, 2014.

\bibitem[Kamp et~al.(2016)Kamp, Bothe, Boley, and Mock]{kamp2016communication}
Michael Kamp, Sebastian Bothe, Mario Boley, and Michael Mock.
\newblock Communication-efficient distributed online learning with kernels.
\newblock In \emph{Joint European Conference on Machine Learning and Knowledge
  Discovery in Databases}, pages 805--819. Springer, 2016.

\bibitem[Kamp et~al.(2017)Kamp, Boley, Missura, and
  G{\"a}rtner]{kamp2017effective}
Michael Kamp, Mario Boley, Olana Missura, and Thomas G{\"a}rtner.
\newblock Effective parallelisation for machine learning.
\newblock In \emph{Advances in Neural Information Processing Systems}, pages
  6477--6488, 2017.

\bibitem[Kirkpatrick et~al.(1983)Kirkpatrick, Gelatt, and
  Vecchi]{kirkpatrick1983optimization}
Scott Kirkpatrick, C~Daniel Gelatt, and Mario~P Vecchi.
\newblock Optimization by simulated annealing.
\newblock \emph{science}, 220\penalty0 (4598):\penalty0 671--680, 1983.

\bibitem[Kolen and Pollack(1991)]{kolen1991back}
John~F Kolen and Jordan~B Pollack.
\newblock Back propagation is sensitive to initial conditions.
\newblock In \emph{Advances in neural information processing systems}, pages
  860--867, 1991.

\bibitem[LeCun(1998)]{lecun1998mnist}
Yann LeCun.
\newblock The mnist database of handwritten digits.
\newblock \emph{http://yann. lecun. com/exdb/mnist/}, 1998.

\bibitem[McMahan et~al.(2016)McMahan, Moore, Ramage, Hampson,
  et~al.]{mcmahan2016communication}
H~Brendan McMahan, Eider Moore, Daniel Ramage, Seth Hampson, et~al.
\newblock Communication-efficient learning of deep networks from decentralized
  data.
\newblock \emph{arXiv preprint arXiv:1602.05629}, 2016.

\bibitem[Murray and Edwards(1994)]{murray1994enhanced}
Alan~F Murray and Peter~J Edwards.
\newblock Enhanced mlp performance and fault tolerance resulting from synaptic
  weight noise during training.
\newblock \emph{IEEE Transactions on neural networks}, 5\penalty0 (5):\penalty0
  792--802, 1994.

\bibitem[Neelakantan et~al.(2015)Neelakantan, Vilnis, Le, Sutskever, Kaiser,
  Kurach, and Martens]{neelakantan2015adding}
Arvind Neelakantan, Luke Vilnis, Quoc~V Le, Ilya Sutskever, Lukasz Kaiser,
  Karol Kurach, and James Martens.
\newblock Adding gradient noise improves learning for very deep networks.
\newblock \emph{arXiv preprint arXiv:1511.06807}, 2015.

\bibitem[Smith et~al.(2017)Smith, Chiang, Sanjabi, and
  Talwalkar]{smith2017federated}
Virginia Smith, Chao-Kai Chiang, Maziar Sanjabi, and Ameet~S Talwalkar.
\newblock Federated multi-task learning.
\newblock In \emph{Advances in Neural Information Processing Systems}, pages
  4424--4434, 2017.

\bibitem[Srivastava et~al.(2014)Srivastava, Hinton, Krizhevsky, Sutskever, and
  Salakhutdinov]{srivastava2014dropout}
Nitish Srivastava, Geoffrey Hinton, Alex Krizhevsky, Ilya Sutskever, and Ruslan
  Salakhutdinov.
\newblock Dropout: a simple way to prevent neural networks from overfitting.
\newblock \emph{The Journal of Machine Learning Research}, 15\penalty0
  (1):\penalty0 1929--1958, 2014.

\bibitem[Wang and Principe(1999)]{wang1999training}
Chuan Wang and Jose~C Principe.
\newblock Training neural networks with additive noise in the desired signal.
\newblock \emph{IEEE Transactions on Neural Networks}, 10\penalty0
  (6):\penalty0 1511--1517, 1999.

\end{thebibliography}

\end{document}